\newtheorem{theorem}{Theorem}
\newcommand{\etal}{et al. }
\newcommand{\numOfInteractionsFinal}{17,386,309}
\newcommand{\numOfUsersFinal}{75,040}
\newcommand{\numOfItemsFinal}{9,781}
\newcommand{\mfnpTwo}{0.4310} 
\newcommand{\mfnpFive}{0.6700} 
\newcommand{\mfnpTen}{0.8179}
\newcommand{\dplcfTwo}{0.4203}
\newcommand{\dplcfFive}{0.5969} 
\newcommand{\dplcfTen}{0.7000} 
\newcommand{\fmfldpLargeTwof}{0.2163}
\newcommand{\fmfldpLargeFivef}{0.3755}
\newcommand{\fmfldpLargeTenf}{0.5131}
\newcommand{\fmfldpLargeTwo}{0.2315}
\newcommand{\fmfldpLargeFive}{0.3980}
\newcommand{\fmfldpLargeTen}{0.5384}
\newcommand{\smsm}{0.1160} 
\newcommand{\sm}{0.1010} 
\newcommand{\smla}{0.1090} 
\newcommand{\ms}{0.6325} 
\newcommand{\mm}{0.1988} 
\newcommand{\ml}{0.1216}
\newcommand{\ls}{0.6972} 
\newcommand{\lm}{0.6801}
\newcommand{\lala}{0.5654}
\newcommand{\percentGainOne}{10\%}
\newcommand{\percentGainTwo}{5\%}
\newcommand{\smallToMedium}{150\%}
\newcommand{\mediumToLarge}{50\%}
  \providecommand\BibTeX{{%
    \normalfont B\kern-0.5em{\scshape i\kern-0.25em b}\kern-0.8em\TeX}}}
\begin{document}

\title{Stronger Privacy for Federated Collaborative Filtering With Implicit Feedback}

\author{Lorenzo Minto}

\author{Moritz Haller}

\affiliation{%
   \institution{Brave Software}
   \city{London}
   \country{UK}}

\author{Hamed Haddadi}

\author{Benjamin Livshits}

\affiliation{%
    \institution{Brave Software}
   \city{London}
   \country{UK}
 }
\affiliation{%
    \institution{Imperial College London}
   \city{London}
   \country{UK}
 }

\renewcommand{\shortauthors}{Minto, et al.}

\begin{abstract}
   Recommender systems are commonly trained on centrally-collected user interaction data like views or clicks. This practice however raises serious privacy concerns regarding the recommender's collection and handling of potentially sensitive data. Several privacy-aware recommender systems have been proposed in recent literature, but comparatively little attention has been given to systems at the intersection of implicit feedback and privacy. To address this shortcoming, we propose a practical federated recommender system for implicit data under user-level local differential privacy (LDP). The privacy-utility trade-off is controlled by parameters $\epsilon$ and $k$, regulating the per-update privacy budget and the number of $\epsilon$-LDP gradient updates sent by each user, respectively. To further protect the user’s privacy, we introduce a proxy network to reduce the fingerprinting surface by anonymizing and shuffling the reports before forwarding them to the recommender. We empirically demonstrate the effectiveness of our framework on the MovieLens dataset, achieving up to \textit{Hit Ratio} with K=10 (HR@10) 0.68 on 50,000 users with 5,000 items. Even on the full dataset, we show that it is possible to achieve reasonable utility with HR@10>0.5 without compromising user privacy.
\end{abstract}

\begin{CCSXML}
<ccs2012>
   <concept>
       <concept_id>10010147.10010257.10010282.10010292</concept_id>
       <concept_desc>Computing methodologies~Learning from implicit feedback</concept_desc>
       <concept_significance>300</concept_significance>
       </concept>
   <concept>
       <concept_id>10002951.10003317.10003347.10003350</concept_id>
       <concept_desc>Information systems~Recommender systems</concept_desc>
       <concept_significance>500</concept_significance>
       </concept>
   <concept>
       <concept_id>10010147.10010178.10010219</concept_id>
       <concept_desc>Computing methodologies~Distributed artificial intelligence</concept_desc>
       <concept_significance>500</concept_significance>
       </concept>
   <concept>
       <concept_id>10002978.10002991.10002995</concept_id>
       <concept_desc>Security and privacy~Privacy-preserving protocols</concept_desc>
       <concept_significance>500</concept_significance>
       </concept>
   <concept>
       <concept_id>10002951.10003227.10003351.10003269</concept_id>
       <concept_desc>Information systems~Collaborative filtering</concept_desc>
       <concept_significance>500</concept_significance>
       </concept>
 </ccs2012>
\end{CCSXML}

\ccsdesc[500]{Information systems~Recommender systems}
\ccsdesc[500]{Information systems~Collaborative filtering}
\ccsdesc[500]{Security and privacy~Privacy-preserving protocols}
\ccsdesc[400]{Computing methodologies~Distributed artificial intelligence}

\ccsdesc[300]{Computing methodologies~Learning from implicit feedback}

\keywords{federated learning, recommender systems, local differential privacy}

\maketitle

\section{Introduction}
\label{sec:intro}
For many years now, \textit{recommender systems} have been an integral part of the users’ experience: helping personalize software products around users’ particular tastes and needs. More specifically, \textit{recommender systems} are used to estimate the user’s preferences from a history of collected personal feedback, which can be either explicit, such as movie ratings, or implicit, which comprises more generic interaction data like views or clicks.

While such systems can significantly enhance user experience, they usually do so at the expense of privacy, by centrally collecting user data. 
Several studies~\cite{Shyong06, Calandrino2011, McSherry2009, Weinsberg2012, Narayanan2008, MacAonghusa2016} have focused on the privacy risks that come with the centralised approach. 
Some of these studies~\cite{Weinsberg2012, Narayanan2008, MacAonghusa2016} highlight how even seemingly non-sensitive data like movie ratings can be used to infer age, gender, and political affiliation of users. Moreover, Calandrino~\etal~\cite{Calandrino2011} have shown how \textit{recommender systems} can be easy targets of inference attacks, whereby the attacker manipulates the recommender to reveal personal information about other users. Finally, an inherent risk of any centralized system is the violation of privacy promises made by the data curator, either knowingly for illegitimate purposes, or through unintended data breaches\footnote{\url{https://en.wikipedia.org/wiki/List_of_data_breaches}}.

These concerns motivate the need for a privacy-first approach. More specifically, we suggest:
\begin{enumerate}
    \item No interaction data should be collected or stored by the recommender.
    \item User contributions should be protected by user-level differential privacy.
    \item Recommendation should be local and on-device.
\end{enumerate}
Current work on privacy-preserving recommendation frameworks either fails to provide formal privacy guarantees~\cite{ammaduddin19, Duriakova2019} or relies on direct or partial access to users' interaction data \cite{Hua, Gao2020}. Shin~\etal\cite{Shin2018} address all the above concerns but rely on a combination of dimensionality reduction through random projection and sparse recovery to achieve reasonable utility. Their method is only suitable when the unperturbed gradient data is sparse, as is the case for recommendation with explicit feedback (i.e. ratings).

We propose a method to achieve all of the above, while minimizing trust on behalf of the user and maintaining reasonable utility. We do so by enhancing existing, federated recommendation methods with a privacy mechanism to prevent reconstruction attacks from gradient data~\cite{Hitaj2017, WangGAN}. For each user, we control the privacy-utility trade-off by selecting a subset of $k$ factors from the item gradient update matrix and applying a privatization mechanism to each that ensures the perturbed factor meets $\epsilon$-local differential privacy. To reduce the fingerprinting surface and prevent a malicious recommender from building longitudinal representations of the user, we introduce a proxy network that breaks linkability between subsequent reports belonging to the same user by stripping metadata from said user’s traffic and shuffling it together with reports from other users.

\subsection{Contributions.}
In summary, in this work we make the following contributions:

\begin{enumerate}
  \item We propose a federated recommender system for implicit feedback working under user-level local differential privacy, where the privacy-utility trade-off is controlled by parameters $\epsilon$ and $k$, regulating the per-update privacy budget and the number of $\epsilon$-LDP gradient updates sent by each user, respectively.
  
  \item We experimentally demonstrate the effectiveness of our proposed framework by analysing the system's performance as a function of various communication and  privacy budgets, as well as item and user set cardinalities. On the MovieLens dataset we show that the system achieves HR@10 up to 0.68 on 50k users with 5k items, and even on the full dataset we show that it is possible to achieve reasonable utility with HR@10>0.5, without compromising user privacy.
  
  \item We further examine the communication costs associated with running the recommendation protocol, and present an analysis of its formal privacy guarantees. We further compare our system against several non-private and privacy-preserving benchmarks, showing that we can provide acceptable levels of utility, while offering stronger privacy guarantees.

\end{enumerate}

\subsection{Paper organization.}
The rest of this paper is organized as follows.
Section~\ref{sec:background} introduces the problem and notation of collaborative filtering algorithms for implicit feedback, and recalls the definition of local differential privacy. Section~\ref{sec:overview} gives an overview of the proposed system. Section~\ref{sec:eval} describes our experimental work and reports the results.
In Section~\ref{sec:discussion} we further discuss the proposed system by taking into consideration privacy and communication costs, and by comparing our solution with competing methods.
Finally, Section~\ref{sec:related} presents an overview of related work before concluding with Section~\ref{sec:conclusions}.

\section{Background}
\label{sec:background}

\subsection{Collaborative Filtering with Implicit Feedback}
\label{sec:implicit_cf}

Collaborative filtering is one of the most popularly used approaches in recommender systems. The goal of collaborative filtering is to model user preferences over a set of items by leveraging the "wisdom of the crowds". User behaviour is described by a set of \textit{interactions} $r_{ui}$, where $u$ is the user index and $i$ is the index of the item being interacted with. For implicit data---our case, $r_{ui}$ indicates user $u$ has interacted with item $i$, or how many times that interaction has happened. Implicit feedback is more abundant, as it does not require an active effort by the user, but it is also less informative than its explicit counterpart, as there is no clear way to infer the qualitative judgment made by the user, i.e. simply knowing a film has been watched does not tell us anything about the user's \textit{preference} for that film.

The first formulation of collaborative filtering for implicit feedback, relying on Singular Value Decomposition (SVD) of the user-item interaction matrix, is proposed in~\cite{Hu2008}. The authors adapt the matrix factorization methods already available for explicit data to the implicit case. First, they define \textit{preference} $p_{ui}$ as a binary indicator $\{0,1\}$ of whether user $i$ has a preference for item $i$, and assign a \textit{confidence} $c_{ui}$ score to this preference. They choose to model $c_{ui}$ as the following:
$$ c_{ui} = 1 + \alpha r_{ui} $$
where $r_{ui}$ is the number of times user $u$ has interacted with item $i$. This definition follows the intuition that an item that has been interacted with multiple times is more likely to be preferred by the user. Second, the cost function is modified from the explicit case to account for all possible $u,i$ pairs, and to accommodate for the varying confidence levels in each observation. The implicit loss function is defined as the following:
\begin{align}
L(\theta) = \sum_{u,i} c_{ui}(p_{ui}-\mathbf{x}_u^T\mathbf{y}_i)^2 + \lambda \left( \sum_{u} ||\mathbf{x}_u||^2 + \sum_{i} ||\mathbf{y}_i||^2 \right)
\end{align}
with $\lambda$ being a joint regularization parameter. The partial derivatives with respect to $\mathbf{x}_u$ and $\mathbf{y}_i$ are given by:
\begin{align}
    \dfrac{\partial J}{\partial \mathbf{x}_u} = -2\sum_i [c_{ui}(p_{ui}-\mathbf{x}_u^T\mathbf{y}_i)]\mathbf{y}_i + 2\lambda\mathbf{x}_u \\ 
    \dfrac{\partial J}{\partial \mathbf{y}_i} = -2\sum_u [c_{ui}(p_{ui}-\mathbf{x}_u^T\mathbf{y}_i)]\mathbf{x}_u + 2\lambda\mathbf{y}_i
\end{align}

\subsection{Federated Collaborative Filtering}
Federated Learning~\cite{McFed, McFed2} has been a fast growing field in machine learning research. It was proposed originally as a way to train a central model on privacy-sensitive data distributed across users' devices. In the federated learning paradigm, the user's data never has to leave the client. Instead, clients train a local model on their private data and share model updates with the server. These updates are then aggregated (typically by averaging) and a global model update is performed. Finally, the updated model is sent back to each client and the process is repeated, until convergence or until satisfying performance is achieved. 

In a recently proposed method for federated collaborative filtering~\cite{ammaduddin19}, the authors distribute the implicit matrix factorization problem introduced in the previous section between a server and multiple clients. At the very beginning, the server randomly initializes an item embeddings matrix $V$ and clients initialize their own user embeddings $\textbf{x}_u$ locally. The item matrix is then shipped to the clients, who locally compute an updated user vector in closed form
\begin{equation}
  \label{eqn:user_update}
  \textbf{x}_u^* = (VC^uV^T + \lambda I)^{-1}VC^u\textbf{p}(u)  
\end{equation}
Where $C^{u}$ is an $M\times M$ diagonal matrix with $C^{u}_{ii}=c_{ui}$ and $\textbf{p}(u)$ is a vector that contains the user preferences, and a client-based gradient update for the item matrix
\begin{equation}
    \label{eqn:item_user_update}
    f(u,i) = [ c_{ui}(p_{ui}-\textbf{x}_u^T\textbf{v}_i) ]\textbf{x}_u
\end{equation}
 Gradient updates from all clients are aggregated together at the server and they are used to update the central item matrix with the following
\begin{align}
    \dfrac{\partial J}{\partial \textbf{v}_i} =& -2\sum_u f(u,i) + 2\lambda \textbf{v}_i \\
    \label{eqn:item_update}
    \textbf{v}_i &= \textbf{v}_i - \gamma \dfrac{\partial J}{\partial \textbf{v}_i} 
\end{align}
The updated item matrix is then sent down to all the clients and the federated process repeats.

\subsection{Local Differential Privacy}

Differential privacy~\cite{dworkDP, Dwork2013} has become the gold standard for strong privacy protection. It provides a formal guarantee that a model's results are negligibly affected by the participation or less of any \textit{single} individual. Differential privacy was originally formulated in a \textit{central} setting: a trusted aggregator collects raw user data and injects controlled noise either in the query inputs, outputs, or both. \textit{Local} differential privacy, on the other hand, is a particular case of the differential privacy framework where the aggregator is not trusted and the noise injection is done directly by the client. Since no trust in the aggregator is required, the local version of DP can provide much stronger privacy protection. Companies like Google and Apple have applied the local model of differential privacy to their privacy-preserving data analytics protocols~\cite{Erlingsson2014, sketch_apple, Zhu2020} and more recently to federated learning~\cite{Bhowmick2018}. In order to define local differential privacy, we first define a local randomizer:

\begin{definition}
\cite{Dwork2013} A randomized algorithm $\Phi: \mathcal{U}\to \mathcal{Y}$ is an $\epsilon$-local randomizer, where $\epsilon>0$, if for all input pairs $x$ and $x'$ in D, and any output Y of $\Phi$, we have
$$ \text{Pr}[\Phi(x) = Y] \leq e^\epsilon \cdot \text{Pr}[\Phi(x') = Y],$$
\end{definition}
Where $x$ and $x'$ belong to universe $\mathcal{U}$ of user data. Now that we have defined a local randomizer, we can proceed to give the definition of local differential privacy:

\begin{definition}
~\cite{Dwork2013, sketch_apple} Let $\Phi: \mathcal{U}^n \to \mathcal{Z}$ be a randomized algorithm mapping a dataset with n records to some range $\mathcal{Z}$. Algorithm $\Phi$ is $\epsilon$-local differentially private if it can be written as $\Phi(d^{(1)}, ..., d^{(n)}) = f(\Phi_1(d^{(1)}), ..., \Phi_n(d^{(n)}))$, where each $\Phi_i : \mathcal{U}\to \mathcal{Y}$ is an $\epsilon$-local randomizer, and $f: \mathcal{Y}^n \to \mathcal{Z}$ is some post-processing function of the privatized records $\Phi_1(d^{(1)}),...,\Phi_n(d^{(n)})$.
\end{definition}

The privacy budget $\epsilon$ controls the trade-off between utility and privacy: when $\epsilon=0$, we have perfect privacy and no utility, while for $\epsilon=\infty$ we would have no privacy and perfect utility.

\section{System}
\label{sec:overview}

\subsection{System Overview}
We propose a novel framework to solve the recommendation task under the constraints set out in Section I by integrating federated recommendation~\cite{ammaduddin19, Lin2020} and local differentially private protection mechanisms~\cite{Duchi, Nguyen2016}. Figure~\ref{fig:system_design} illustrates the three components of our proposed system, which we describe in detail below.

\begin{figure*}[t]
  \centering
  \includegraphics[keepaspectratio, width=0.6\textwidth]{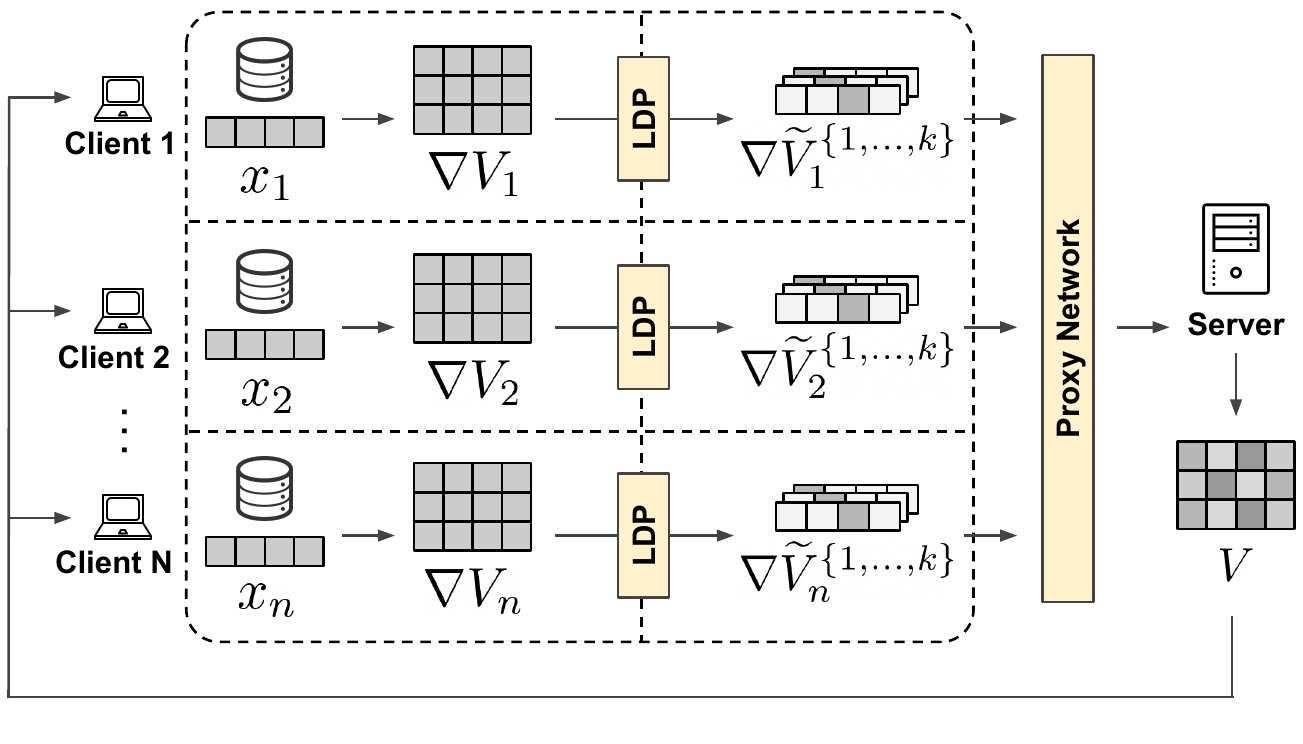}
  \caption{The proposed system design of privacy-preserving federated collaborative filtering with implicit feedback. The user vector $x$ never leaves the device.}
  \Description[Figure 1]{Fully described in the text.}
  \label{fig:system_design}
\end{figure*}

\subsubsection{Client with LDP Module}
Initially, each client randomly initializes an $F$-dimensional local user embedding $\textbf{x}_u$. This user embedding \textit{never} leaves the client. Each participant receives user-agnostic item matrix $V$ from the server, signalling the start of a federated epoch. The local user embedding and the item matrix are used to compute 
(1) an updated user embedding $\textbf{x}_u^*$, which is, again, kept strictly private, and 
(2) a gradient update $\nabla V_u$ for item matrix $V$. Item-matrix gradient update $\nabla V_u$ is passed to the \textit{LDP Module}, where we pick $k$ factors uniformly at random and to each apply our privatization mechanism with per-report privacy budget $\epsilon$. 
These $k$ $\epsilon$-LDP reports are then sent to a proxy network. On arrival of the new item matrix $V^\prime$, a new federated epoch starts and the above process repeats. 

Once the training is finished, the user embedding $\textbf{x}_u$ can be used in conjunction with the latest item matrix $V$ to compute on-device the confidence that the user will interact with item $i$ by taking the dot product between the two relevant embeddings $\langle \textbf{x}_u , \textbf{v}_i \rangle$.

\textbf{Input.} Item matrix $V,$ per-update privacy budget $\epsilon$, number of updates~$k$.

\textbf{Output.} $k$ $\epsilon$-LDP gradient reports.

\subsubsection{Proxy Network}
The proxy network sits between the server and the client. Once the messages containing the $k$ $\epsilon$-LDP reports reach the proxy network, they are stripped of their metadata (such as IP address), split into the single reports, shuffled with the reports from other users' messages to break any existing timing patterns, and forwarded to the server. The proxy network takes care of breaking linkability between the streams of $k$ reports coming from each client at each epoch. This greatly reduces the user fingerprinting surface and prevents the recommender from building any longitudinal representation of the user, both within and across epochs.

\textbf{Input.} $k$ $\epsilon$-LDP gradient reports from each of $N$ users.

\textbf{Output.} $N\times k$ unlinkable $\epsilon$-LDP gradient reports.

\subsubsection{Server}
The server randomly initializes an $M\times F$ item matrix, which constitutes the global part of the shared model. This is shipped to all the clients to initiate the federated learning process in epoch 0. At each epoch, once enough $\epsilon$-LDP gradient reports reach the server, they are aggregated together to reconstruct a global item matrix update $\nabla V$. This is used to compute an updated item matrix $V^\prime$, which is then shipped to each client to initiate the next federated learning epoch. 

\textbf{Input.} $N\times k$ unlinkable gradient reports.

\textbf{Output.} Updated item matrix $V^\prime$.

\subsection{Proposed Method}
For our study, we adopt the matrix factorization-based federated collaborative filtering framework as proposed in ~\cite{ammaduddin19}. At the start of each learning epoch, each participant in the recommendation network receives the most recent item matrix $V$ from the server. Each client computes an updated $F$-dimensional user embedding $\textbf{x}_u^*$ using (\ref{eqn:user_update}), and an $MF$-dimensional item matrix gradient update $\nabla V_u$ using (\ref{eqn:item_update}). To privatize the gradient updates before shipping them to the server we adapt to our recommendation task the randomized binary response mechanism proposed in~\cite{Nguyen2016}. This method allows computing aggregates of users' gradients while satisfying local differential privacy. In particular, the proposed method takes in a numerical, multidimensional gradient matrix, randomly selects one item and one factor from it and encodes the selected value in the transmission frequency of two opposite global constants $\{ B, -B \}$, where $B$ is defined as follows
\begin{equation}
    B = \dfrac{e^\epsilon+1}{e^\epsilon-1}\cdot MF
\end{equation}
with B only depending on the input data dimensionality $M\cdot F$ (same for all users) and per-update privacy budget $\epsilon$ (shared between all users).
Algorithm (\ref{alg:ldp_perturbation}) illustrates the adopted perturbation mechanism. 


\begin{algorithm}
\SetAlgoLined
\KwIn{Item gradient matrix $\nabla V_u$}
 Initialize $\nabla \Tilde{V}_u = 0^{M\times F}$ \\
 Sample item $i$ uniformly at random\\
 Project $\nabla V_u^i$ onto $[-1,1]$\\
 Sample factor $f$ uniformly at random\\
 Sample a Bernoulli variable $\beta$ such that
 $$ Pr[\beta=1] = \dfrac{\nabla V_u^{i,f}\cdot(e^\epsilon -1)+e^\epsilon+1}{2e^\epsilon+2} $$
 \nl \eIf{$\beta = 1$}{
 $\nabla \tilde{V}_u^{i,f} = \dfrac{e^\epsilon+1}{e^\epsilon-1}\cdot MF$
 }{
 $\nabla \tilde{V}_u^{i,f} = - \dfrac{e^\epsilon+1}{e^\epsilon-1}\cdot MF$
 }
 \Return $\nabla \Tilde{V}_u$
 \caption{Gradient LDP perturbation mechanism, adapted from \cite{Nguyen2016}}
 \label{alg:ldp_perturbation}
\end{algorithm}

The output of this mechanism $\nabla \Tilde{V}$ can be represented by a tuple $(i,j)$ where $i$ is the index of the selected gradient factor, and $j$ is either 0 or 1, for whether $B$ or $-B$ has been selected. This message is sent to the recommender, where it's aggregated with those from other users to form a global item matrix gradient update. More in line with the conventional federated learning literature and differing from \cite{ammaduddin19}, we average over the collected gradients.
\begin{equation}
    \nabla \Tilde{V} = \dfrac{1}{N} \sum_u^N  \nabla \Tilde{V}_u  
    \label{eqn:estimator}
\end{equation}
Nguyên~\etal\cite{Nguyen2016} show that the mean of all users' raw values can be estimated by taking the average over the perturbed values, which in our adaptation translates to (\ref{eqn:estimator}) being an unbiased estimator of $\nabla V$. However, from the experiments run on this setup we observe that our federated protocol takes long to learn and ultimately performs poorly. The information exchanged with the server at each epoch is extremely sparse and very little learning can be extrapolated from it. More specifically, each user only contributes to only as much as $\frac{1}{MF}$ of the gradient matrix, with reports being further randomized. This ultimately leads to the computed aggregates being very poor estimates of the target values, even with a large number of users participating in the protocol.  

In order to improve the accuracy of our system, we increase the amount of information shared by each user at each epoch by setting the number of randomly elected factors to $k$. Each gradient factor gets perturbed separately resulting in $k$ tuples $(i, j)$ being produced by each client and subsequently being shared with the server. This procedure, however, increases the privacy budget by a factor of $k$ from the composition theorem~\cite{Dwork2013}. If with $k=1$ our system satisfied $\epsilon$-LDP, now it satisfies $k\epsilon$-LDP, where $k$ is the number of tuples exchanged by each user with the server. The $k$ $\epsilon$-LDP reports are sent to the proxy network, which then forwards them to the server where the global item matrix gradient update is estimated as (\ref{eqn:estimator}). The full learning protocol is shown in Algorithm \ref{alg:system_algorithm}.

\begin{algorithm}
\SetAlgoLined
 Initialize $X$, $V$ \\
 \For{$t \in \{ 0,1,2...T \}$}{
    \For{$u \in \{ 0,1,2...N \}$}{
        Derive $\textbf{x}_u^*$ with (\ref{eqn:user_update}) 
        \Comment*[r]{$\forall$ clients}
        Compute $\nabla V_u$ with (\ref{eqn:item_user_update}) \\
        \For{$i \in 0,1,2...K$}{
            $\nabla \tilde{V}_u^{\{i\}} = \text{Algorithm1}(\nabla V_u, \epsilon)$
        }
    }
    Collect $\nabla \tilde{V}_u^{\{1,...,k\}}$ from all clients  \Comment*[r]{server}
    $\nabla \tilde{V} = \dfrac{1}{N} \sum_u^N \nabla \tilde{V}_u$ \\
    $V = V - \gamma \{ \nabla \tilde{V} + 2\lambda V \}$ \\
 }
 \Return $X$, $V$
 \caption{Local differentially private federated recommender system}
 \label{alg:system_algorithm}
\end{algorithm}

\section{Experiments}
\label{sec:eval}

\subsection{Data}
Experiments are run on the MovieLens \cite{MovieLens} dataset. The data consists of ratings from ~138,493 users on ~27,278 movies. Ratings range from~0.5 to~5 in~0.5 increments. To convert the~20M interactions into implicit feedback, we transform each rating to a binary value, with the following rule:
\begin{equation}
  r_{ui}^\text{imp} =
    \begin{cases}
      1, & \text{if $r_{ui}^\text{exp}>0$}\\
      0, & \text{otherwise}
    \end{cases}       
\end{equation}
In short, $r_{ui}$ is~1 if user $u$ has watched item $i$,~0 otherwise. Analogous to \cite{Gao2020}, we retain only users and items with at least~60 interactions. Our final dataset is~97.7\% sparse with \numOfInteractionsFinal{} interactions, from \numOfUsersFinal{} users on \numOfItemsFinal{} films. To further study the impact of user and item set cardinality on system performance we sample nine subsets from the full data set, with item set cardinalities \textit{small} (1,000), \textit{medium} (5,000), \textit{large} (9,781) and user set cardinalities \textit{small} (1,000), \textit{medium} (10,000), \textit{large} (50,000).

\subsection{Evaluation}
For evaluation we adopt the commonly used leave-one-out technique~\cite{Gao2020, He2017}. For each user we randomly sample one interaction and use it as the test item. We cross-validate the model by running multiple random leave-one-out splits and presenting the mean results. To measure the performance of our recommender system, we adopt the popular \textit{Hit Ratio} (HR@$K$) metric, where $K$ is the number of recommended items. HR@$K$ measures the probability of the recommender ranking the left-out test item in its top-$K$ recommendations. To establish comparability between data from different items spaces, we follow recent literature~\cite{He2017, Gao2020} by sampling~99 items that the user hasn't interacted with, appending the test item, and ranking the resulting subset. 

\subsection{Parameters}
As parameters we chose $f=5$ factors, a regularization term of $\lambda=10^{-6}$, and a learning rate of $\gamma=10^{-3}$, though values may vary slightly between experiments. The parameters were optimised with grid search over multiple cross-validation runs. Each epoch of training performs~20 gradient descent steps to update the global item matrix. We consider different per-update privacy budgets and numbers of updates, $\epsilon$ and $k$, respectively.

\subsection{Results}
In the following we outline the results of our experiments. We first study the privacy-utility trade-off as determined by $\epsilon$ and $k$. We then look at learning curves, as well as the impact of item set cardinality on utility. We end by comparing our method against various private and non-private benchmarks.

\subsubsection{Privacy-utility trade-off} Figure 2 shows the system utility as measured by HR@10 for the small item set, all three user set sizes and varying per-update privacy budget $\epsilon$ and number of updates $k$. As expected, utility is on par with the random baseline for $k=1$ across all user set sizes, as the signal sent to the recommender is insufficient to guide any learning. However, with increasing user set size, as well as per-report privacy budget and number of updates, we observe a consistent and significant increase in utility---in some cases reaching HR@10 of 0.7 and higher, while maintaining strong privacy---demonstrating the efficacy of the proposed system. When analysing the rate of improvement, it is apparent that increasing the number of updates $k$ and increasing the number of users, both yield diminishing returns. E.g. we observe a $\approx$\smallToMedium{} increase in utility from small to medium user set size and only a $\approx$\mediumToLarge{} increase from medium to large. Similarly, increasing $k$ from 50 to 100 results in a \percentGainOne{} utility improvement over merely \percentGainTwo{} when increasing $k$ from 100 to 250. It should be noted that increasing the number of updates $k$, unlike increasing user set size, has a direct impact on communication as well as privacy costs, so it should always be justified in terms of utility gain.

\begin{figure*}[tb]
  \centering
  \includegraphics[keepaspectratio, width=0.295\textwidth]{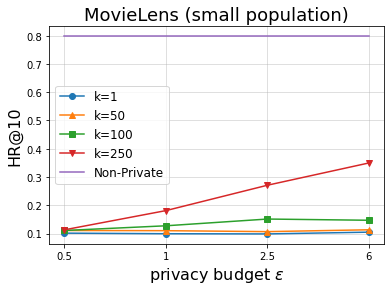}
  \includegraphics[keepaspectratio, width=0.295\textwidth]{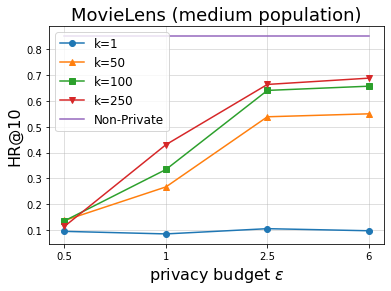}
  \includegraphics[keepaspectratio, width=0.295\textwidth]{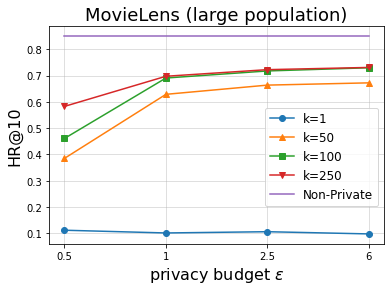}
  \caption{HR@10 performance with varying privacy budget $\epsilon$ and \textit{small, medium} and \textit{large} user set sizes. Number of epochs has been fixed to~20 using the \textit{small} item set.}
  \label{fig:HR_eps}
  \Description[HR@10 for small, medium and large user set sizes, over multiple privacy budgets and number of updates]{We consider privacy budgets epsilon 0.5, 1, 2.5, 6. The first subplot shows hit rate at 10 for the small user set. Accuracy is en par with the random baseline for k equals to 1 and 50 across all the privacy budgets, increasing only slightly with k equals 100 to hit rate at 10 equals 0.15. For k equals 250, accuracy has significantly improved to hit rate at 10 equals 0.35. The second subplot shows accuracy for the medium user set. Accuracy for k equal to 1 is still en par with random, and so is accuracy for all considered ks and epsilon equals 0.5. For k equal 50, 100, 250 and epsilon greater than 0.5 accuracy increases consistently across the adopted privacy budgets and reaches its peak at epsilon equals 6 and k equals 250 at hit rate at 10 slightly below 0.7. In the last subplot, accuracy for k equals 1 is still en par with random for all epsilons. For k equals 50, 100, 250 accuracy steadily increases for all epsilons. Already for epsilon equals 0.5 and k equals 50 we have hit rate at 10 around 0.4. For the same epsilon and k equals 250 we have hit rate at 10 around 0.6. Accuracy peaks at hit rate at 10 above 0.7 for epsilon equals 6 and k equals 250.}
\end{figure*}

\subsubsection{Learning curves} Figure~\ref{fig:HR_epochs} shows HR@10 against the number of epochs on the validation set for a small item set with medium and large user set sizes and two different communication budgets of $k$=50 and $k$=100. The privacy budget is fixed at $\epsilon$=2.5. Unsurprisingly, we can observe that the number of exchanged updates $k$ has significant impact on the rate of learning as a higher quality signal reaches the recommender at each epoch. Reasonable utility of HR@10>0.5 can already be achieved after only 5-10 epochs, which is important for producing quality recommendations quickly, as well as keeping apace with user interest drift. Finally, user set size seems to affect only asymptotic performance, with larger user sets showing higher performance at convergence, corroborating findings from Figure~\ref{fig:HR_eps}.

\begin{figure}[tb]
  \centering
  \includegraphics[keepaspectratio, width=0.42\textwidth]{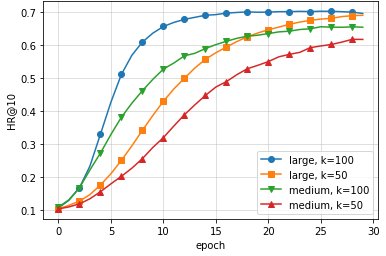}
  \caption{Learning curves with HR@10 validation set performance across 30 epochs, for varying $k$ and user set sizes.}
  \label{fig:HR_epochs}
  \Description[Plot of HR@10 across learning epochs for k=50 and k=100 for medium and large user sets]{ Learning curves all start at epoch 0 and random performance hit rate at 10 equals 0.1. After epoch 10, for k equals 100 we have the hit rate at 10 is slightly above 0.65 for the system trained on the large user set and hit rate at 10 is slightly below 0.55 for system trained on the medium user set. For k equal 50, hit rate at 10 on the large user set is around 0.42 and hit rate at 10 on the medium user set is slightly above 0.3. After epoch 30, the order of the curves has slightly changed as the system trained on the large user set with k equal 50 has outperformed the system trained on the medium user set with k=100. The systems trained on the large user set have both converged to around hit rate at 10 equals 0.7. While, the systems trained on the medium user set have reached hit rate at 10 equals 0.65 with k equal 100, and hit rate at 10 slightly above 0.6 with k equal 50.}
\end{figure}

\subsubsection{Impact of item set cardinality on performance} We extend our discussion to different item set sizes and in Figure~\ref{fig:pop_vs_items_table} report HR@10 for various combinations of user and item set sizes. Users only report a small subset of their local gradient. As the number of items increases by $\Delta M$, the local gradient dimensionality increases by $\Delta M \times F$, where $F$ is the number of factors. As a consequence, while maintaining the number of updates $k$ fixed, the signal sent to the recommender in each epoch becomes sparser and loses in quality, thus negatively impacting the overall performance of the system. Empirically, we observe that utility decreases with increasing item set size. The effect is more pronounced for smaller user set sizes, while for the large user set size, utility degrades slower, showing acceptable performance (HR@10>0.5) across all item set sizes.

\begin{figure}[tb]
    \centering
    \setlength{\tabcolsep}{16pt}
    \begin{tabular}{ cccc}
     \toprule
     Users/Items & 1K & 5K & 10K \\ 
     \midrule
     1,000 & \smsm & \sm & \smla  \\
     10,000 & \ms & \mm & \ml \\
     50,000 & \bf \ls & \lm & \lala \\
     \bottomrule
    \end{tabular}
    \caption{HR@$10$ performance for datasets of varying user set and item set cardinalities. Number of epochs fixed at~20, $\epsilon$ to~2.5 and number of updates $k=100$.}
    \label{fig:pop_vs_items_table}
\end{figure}

\subsubsection{Benchmark against private and non-private benchmarks} The final experiment aims to compare the proposed system against various private and non-private benchmarks on the full dataset with number of updates $k$=100 (\textit{FMF-LDP-100}) and $k$=250 (\textit{FMF-LDP-250}) and fixed privacy budget of $\epsilon$=2.5. We chose a random baseline model to establish the lower bound and traditional non-private matrix factorizaion (\textit{MF-NP}) as a natural upper bound. We further compare our system with Gao~\etal\cite{Gao2020} \textit{DPLCF} which, to the best of our knowledge, is the most comparable method in existing literature. We replicate their validation split by using only the latest interaction as the test item for each user. Models are trained over 20 epochs and the utility is measured as top-$K$ hit rate for $K=\{2,5,10\}$. The results are shown in Figure~\ref{tab:topk_table}.
Our system clearly out-performs the random baseline with HR@10 of 0.5 and higher, corresponding to 5$\times$ improvement, showing that it is possible to provide reasonable utility without compromising user privacy. As expected, the system does not match performance of its non-private equivalent. On the full dataset the system also falls short of Gao~\etal\cite{Gao2020}, however we argue that our method is in fact a competitive alternative on data with smaller item set cardinality than the full MovieLense dataset as shown in Figure 2. In section~\ref{sec:privacy} we go on to show that our intuitive and formal privacy guarantees are significantly stronger.

\begin{figure}[tb]
    \centering
    \setlength{\tabcolsep}{12.5pt}
    \begin{tabular}{ lccc }
     \toprule
     Method & HR@2 & HR@5 & HR@10 \\ 
     \midrule
     MF-NP  & \mfnpTwo & \mfnpFive & \mfnpTen \\
     DPLCF \cite{Gao2020} & \dplcfTwo & \dplcfFive & \dplcfTen \\
     FMF-LDP-100 & \fmfldpLargeTwof & \fmfldpLargeFivef & \fmfldpLargeTenf \\
     FMF-LDP-250 & \bf \fmfldpLargeTwo & \bf \fmfldpLargeFive & \bf \fmfldpLargeTen \\
     Random & 0.0200 & 0.0500 & 0.1000 \\
     \bottomrule
    \end{tabular}
    \caption{HR@$K$ for various benchmarks on the latest split of the full dataset after at most~20 epochs of training and with $\epsilon=2.5$, where applicable.}
    \label{tab:topk_table}
\end{figure}

\section{Discussion}
\label{sec:discussion}
In the following, we will discuss the privacy guarantees of our proposed system and show that it provides stronger privacy compared to similar methods. We will end with a practical analysis of the communication cost of running the protocol.

\subsection{Privacy Analysis}
\label{sec:privacy}
This section analyzes how our system ensures user-level local differential privacy. We will first show that individual gradient updates satisfy event-level $\epsilon$-local differential privacy, yielding $k\epsilon$-local differential privacy at user-level. Then we will argue how the proxy network can further safeguard privacy by establishing unlinkability of subsequent and concurrent gradient updates.

\begin{theorem}
Algorithm~\ref{alg:ldp_perturbation} satisfies $\epsilon$-local differential privacy.
\end{theorem}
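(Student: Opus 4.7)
The plan is to bound the worst-case ratio of output probabilities over any pair of input gradient matrices, using the fact that each output is determined by an input-independent uniform choice of coordinates followed by a single Bernoulli draw whose parameter depends on the (clipped) input value.

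First, I would enumerate the output space. The algorithm always returns a matrix with exactly one non-zero entry, at a position $(i,f)$ chosen uniformly from $[M]\times[F]$, whose value is either $+B$ or $-B$ with $B=\frac{e^\epsilon+1}{e^\epsilon-1}\cdot MF$. So each outcome can be identified with a triple $(i,f,s)$ with $s\in\{+,-\}$, and the probability factors as
$$\Pr[\Phi(\nabla V_u)=(i,f,s)] = \frac{1}{MF}\cdot q_s\bigl(\mathrm{clip}(\nabla V_u^{i,f})\bigr),$$
where $q_{+}(v)$ denotes the Bernoulli probability $\Pr[\beta=1]$ prescribed by the algorithm and $q_{-}(v)=1-q_{+}(v)$.

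Second, since the coordinate-sampling factor $1/MF$ does not depend on the input, the privacy-loss ratio for any two inputs $\nabla V_u$ and $\nabla V_u'$ collapses to a pointwise ratio of Bernoulli probabilities at the fixed location $(i,f)$. After the projection step, the relevant values lie in $[-1,1]$, on which
$$q_{+}(v)=\frac{v(e^\epsilon-1)+e^\epsilon+1}{2(e^\epsilon+1)} \in \left[\frac{1}{e^\epsilon+1},\ \frac{e^\epsilon}{e^\epsilon+1}\right],$$
attaining the endpoints at $v=-1$ and $v=+1$ respectively, and analogously for $q_{-}$ by symmetry $v\mapsto -v$. Taking the worst-case ratio gives
$$\frac{\max_v q_s(v)}{\min_v q_s(v)} = \frac{e^\epsilon/(e^\epsilon+1)}{1/(e^\epsilon+1)} = e^\epsilon,$$
which is exactly the $\epsilon$-LDP bound required by the definition in Section~\ref{sec:background}.

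Finally, I would flag that the clipping step is essential — without it, larger magnitudes would blow up the ratio — and that one must check $q_{+}(v)\in[0,1]$ for all $v\in[-1,1]$ so the Bernoulli draw is well-defined, which follows from the same endpoint computation. I do not expect any real obstacle beyond this book-keeping; the crux is simply recognizing that the uniform coordinate sampling is data-oblivious, so the analysis reduces to the one-dimensional randomized-response argument of Nguyên~\etal~\cite{Nguyen2016} applied at the selected coordinate.
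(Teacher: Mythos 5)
Your proposal is correct and follows essentially the same route as the paper's proof: both reduce the privacy-loss ratio to the worst-case ratio of the Bernoulli success probabilities at the selected coordinate (exploiting that the uniform choice of $(i,f)$ is data-independent so the $1/MF$ factor cancels), and both evaluate that ratio over the clipped range $[-1,1]$ to obtain exactly $e^\epsilon$ for each sign of the output. Your write-up is in fact somewhat more explicit than the paper's about the factorization of the output distribution and the necessity of the projection step, but the underlying argument is identical.
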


\begin{proof}
To show that Algorithm \ref{alg:ldp_perturbation} is $\epsilon$-local differentially private, for all inputs $\nabla V_q, \nabla V_q^\prime$ and all outputs $v_q$, we have
\begin{align*}
    \dfrac{P(\Tilde{\nabla V_q}=v_q|\nabla V_q)}{P(\Tilde{\nabla V_q}=v_q|\nabla V_q^\prime)} &\leq \max_{v_q} \dfrac{P(\Tilde{\nabla V_q}=v_q|\nabla V_q)}{P(\Tilde{\nabla V_q}=v_q|\nabla V_q^\prime)} \\
    &= \dfrac{\max_{v_q} P(\Tilde{\nabla V_q}=v_q|\nabla V_q)}{\min_{v_q} P(\Tilde{\nabla V_q}=v_q|\nabla V_q^\prime)}
\end{align*}
where $\Tilde{\nabla V_q}$, as per Algorithm 1, is non-zero only at index $j$. Further assuming that the randomly chosen constant is positive, i.e. $\beta=1$, we have
\begin{align*}
    \dfrac{\max_{v_q^j} P(\beta=1|\nabla V_q^j)}{\min_{v_q^j} P(\beta=1|\nabla V_q^{j,\prime})} &= \dfrac{\max\limits_{\nabla V_q^{j} \in [-1,1]} \nabla V_q^j (e^\epsilon-1)+e^\epsilon+1}{\min\limits_{\nabla V_q^{j,\prime}  \in [-1,1]} \nabla V_q^{j,\prime} (e^\epsilon-1)+e^\epsilon+1} \\
    &= e^\epsilon
\end{align*}
The same holds if the randomly chosen constant is negative, i.e. ($\beta=0$).
\end{proof}

Having shown that individual gradient updates are $\epsilon$-local differentially private at event-level, we can easily see that at user-level we have $k\epsilon$-local differential privacy for $k$ updates via the composition theorem. This guarantee extends by Definition~2 to the aggregated updates across clients in Algorithm \ref{alg:system_algorithm}, showing that the system ensures $k\epsilon$-local differential privacy at user level.

The selected range of per-update $\epsilon$ in our experiments is common to related work in industry, such as \cite{sketch_apple}. It is also worth noting that the choice of $\epsilon$ needs to be calibrated to the type of data being protected. The same privacy budget applied to different types of data such as raw feedback, model parameters or model parameter updates, implies profoundly different privacy guarantees.

We now consider the privacy advantages of using a proxy network. First, the LDP-privatized reports are anonymized by the proxy network. This is done by removing metadata and shuffling the reports with reports from other users before forwarding them to the server. The server is thus unable to link together multiple reports from any individual client. This significantly reduces the fingerprinting surface on each user's gradient collection and prevents the recommender from building longitudinal profiles of the user, both within, as well as across epochs. Finally, Erlingsson~\etal\cite{erlingsson2020amplification} show that shuffling anonymized reports can significantly amplify privacy guarantees when viewed from the central model.

We conclude by illustrating how relatively little data the client has to send to the recommender. Assuming a round of~20 epochs and with $k=100$ updates each, a single user contributes to at most $20\times 100=2,000$ factors of the shared item gradient matrix. Using the full-size MovieLens as an example and setting~5 as the number of factors, a single user only sends at most $4\%$ of the entire item gradient matrix with $M\times D \approx 50,000$.

\subsection{Comparison with Similar Methods}
\label{sec:comparison}
Gao~\etal\cite{Gao2020} also focus on the problem of privacy-preserving top-$n$ recommendation with implicit feedback. In their proposed method, each user perturbs their interaction data before sending it to the recommender, which in return estimates item-to-item similarity based on the perturbed feedback. The learned item-matrix is then disseminated back to the users for local on-device recommendations based on past interaction history using kNN. Although we share similar motivations, their method differs significantly from ours.

As shown in Section~\ref{sec:eval}, our framework's performance for $k=250$ is $22\%$ below the performance of the privacy-preserving system proposed in~\cite{Gao2020}. However, our framework can still provide reasonably strong performance HR@10=0.5384, while providing significantly stronger privacy protection to the user. Gao~\etal\cite{Gao2020} privatization algorithm is applied on interaction data (i.e. how many times a film has been watched) and satisfies user-level $M\epsilon^\prime$-differential privacy, where $M$ is the item set size. Whereas, our privatization algorithm is applied on gradient data and satisfies $k\epsilon$-differential privacy, where $k$ is the number of updates. Since the two privatization mechanism act on different types of data, it is hard, if not impossible, to formally compare the respective budgets and guarantees. Intuitively, however, gradient data can be considered less privacy-sensitive than raw interaction data, meaning that, for a shared privacy budget $\epsilon$, a privacy guarantee on gradient data is stronger than one on interaction data. Even in the scenario where we assume these two privacy budgets to be comparable, since by design $k \ll M$, we have that our framework's user-level privacy budget is significantly smaller than the one of the competing method.

\subsection{Communication Cost}
Downstream, at each epoch of federated training each client receives an $M \times F$ real-valued item matrix $V$ from the server, where $F$ we consider fixed at 5. Depending on the size of the chosen item set, this message will have different sizes: in the best case, where number of items is~1,000, we have that $V$ weighs about~20 KB; whereas, when using the full~9,781 item set we have that $V$ weighs about~0.2 MB. Assuming a maximum of 20 epochs of training and that the user participates to all of them, the total worst-case download cost for a single run of our federated system is~4 MB. Upstream, each client transmits $k$ tuples $(i, j)$, where $i$ and $j$ are respectively one~4-byte integer and one-bit boolean. At each epoch, each client uploads about~$4k$ bytes, where $k$ is the number of tuples. Assuming $k=100$, which is the most common communication budget in our analysis, the uploaded data per epoch by each user would amount to~$0.4$ KB. For a total of~20 epochs, the total upload cost is~8 KB.

\section{Related Work}
\label{sec:related}

\subsection{Privacy-Preserving Recommendation}
In this review, we only consider work that treats the recommender as a potential attacker. We also define three levels of user data granularity: raw, model weights, gradient updates. Sharing raw data has the greatest privacy risk, while sharing gradient updates the lowest. Under the defined setting, we identify two main categories of methods.  

\subsubsection{Data Collection}
In this category, a protection mechanism is adopted to privatize user \textit{raw} data before sharing it with the recommender. These methods rely on a central model and data collection, however obfuscated, happens at \textit{raw} level. Polat~\etal\cite{Polat2005} inject Gaussian noise in the reported ratings and use SVD-based collaborative filtering to produce privacy-preserving recommendations, however the authors fail to provide any formal privacy guarantees for their method. Li~\etal\cite{Li2017} study the problem of point-of-interest recommendation and propose an ad-hoc solution that involves transforming the raw trajectories into bipartite graphs before injecting carefully calibrated noise to meet $\epsilon$-differential privacy guarantees. Shen and Jin~\cite{Shen1, Shen2} propose privacy-preserving recommendation systems where user ratings are perturbed locally with provable privacy guarantees before being submitted to the recommender. More closely related to our specific problem setting, Gao~\etal\cite{Gao2020} adopt a differentially private protection mechanism based on randomized response to obfuscate private \textit{raw} level interaction data before reporting it to the server. This data is used to estimate an item similarity matrix which is sent to users, who can then produce the recommendation results locally, further safeguarding their privacy.

All the above methods guarantee $\epsilon$-differential privacy at \textit{event-level}, that is they can only reduce the impact on the results of a certain interaction~---~a rating or click. Our framework, instead, strives to guarantee \textit{user-level} differential privacy in order to protect the entirety of a user's interactions. It should be noted that an $\epsilon$-DP guarantee at event-level naively translates, by the composition theorem~\cite{Dwork2013}, to a $d\epsilon$-DP guarantee at user-level, where $d$ is the dimensionality of the transmitted data and $\epsilon$ the event-level privacy budget.

\subsubsection{Distributed/Federated} In this category, learning is distributed between clients and server. Users' interaction data is kept strictly local and only gradient updates of a local model are shared with the server. As raw gradients have been shown to leak user information~\cite{WangGAN, Hitaj2017}, gradient data is passed through a privatization mechanism and only then sent to the server. Hua~\etal~\cite{Hua} inject Gaussian noise into the local gradient updates to ensure that the transmitted gradient meets local differential privacy guarantees. However, since their method only shares the gradients for items that have been rated by the user, it indirectly gives away information about the user's interaction data. Shin~\etal~\cite{Shin2018} also take a distributed approach to recommendation and are the first to adopt \textit{user-level} local differential privacy as privacy requirement. They, however, have to deal with a large perturbation error caused by the stricter privacy requirements. They reduce the perturbation error by adopting dimensionality reduction through random projection, and sparse recovery, offering the first evidence of a recommender system working under local differential privacy guarantees. However, none of the above methods focus on recommendation for implicit feedback, and are only suitable for \textit{explicit} (i.e. ratings) data.
Recently, Duriakova~\etal\cite{Duriakova2019} proposed a decentralised matrix factorization protocol that enhances privacy by letting the user select the amount and type of information shared. However, the authors don't provide any formal differential privacy guarantees for their proposed method.

\subsection{Local Differential Privacy for Analytics and ML}
Because \textit{local} DP both eliminates the need for a central trusted data curator and defuses the risks of inference attacks, it has recently garnered more attention than its \textit{central} counterpart. Google's RAPPOR~\cite{Erlingsson2014} proposes a combination of randomized response and Bloom filters to make possible crowd-sourcing simple statistics, such as frequently visited websites or OS process names, under LDP guarantees. Similarly, Apple propose Private Count Mean Sketch~\cite{sketch_apple}, a data collection protocol to identify popular emojis and popular health data types from users, while guaranteeing local differential privacy. More generally, a series of LDP mechanisms have been proposed to privatize tasks like frequency-estimation and heavy-hitters detection~\cite{Zhu2020, Duchi, Wang2019, Nguyen2016}. 

Local differential privacy has also recently been applied to the task of federated learning~\cite{Liu2020, Sun2020, Truex2020} and, more broadly, to the task of distributed gradient descent~\cite{Nguyen2016, Shin2018, Wang2019}. To prevent private information leaking from raw gradient updates, each client feeds his local gradient updates to a privatization mechanism before shipping it to the server. This procedure, however, similarly to what we observed in our experiments, causes the estimation error of models learned on this privatized updates to increase with the dimensionality of the updates, raising utility issues even for very simple models. Several techniques have been proposed to increase the utility of these exchanges. Nguyên~\etal~\cite{Nguyen2016} propose sampling a \textit{single} dimension at random from the gradient vector, and submitting a perturbed version of the value to the server. The perturbed values are collected and the mean gradient of all users can be estimated on each gradient dimension. This technique succeeds in reducing the noise injected in the perturbed gradient update but, at the same time, induces extreme sparsity in communication between user and server. Shin~\etal~\cite{Shin2018} build on the previous method and adopt dimensionality reduction through random projection in order to increase the utility of the gradient data and to lower the estimation error. However, their method relies on sparse recovery to retrieve the full-dimensional gradient matrix from the projection, which is only suitable when the unperturbed gradient data is sparse, as in the case of recommendation based on explicit data.
With a similar objective, Liu~\etal~\cite{Liu2020} propose a top-k dimensions selection mechanism to send higher ``quality'' information to the server, while avoiding the reconstruction loss caused by random projection. Finally, Sun~\etal~\cite{Sun2020} propose splitting and shuffling gradient updates locally before sending them to the aggregator model to break linkability. However, the authors make the erroneous assumption that breaking linkability between the single gradient factors prevents the per-report privacy budgets from composing.

\section{Conclusions}
\label{sec:conclusions}

In this paper we presented what is to the best of our knowledge the first general privacy-preserving federated recommendations framework for implicit feedback under user-level local differential privacy guarantees. We empirically demonstrated the effectiveness of our federated recommendation framework across a variety of privacy and communication budgets, as well as item and user set cardinalities. On the MovieLens dataset, our system achieves up to HR@10=0.68 on 50,000 users with 5,000 items, and even on the full data set we show that our system can achieve good utility, with HR@10>0.5, without compromising user privacy. While likely not being an optimal solution for utility-first applications, where privacy is more of a concession than a requirement, we have shown how it is possible with a privacy-first approach to achieve reasonable performance on recommendation with implicit feedback.
\bibliographystyle{ACM-Reference-Format}
\bibliography{acmart.bib}

\end{document}